\relax
\documentclass[letterpaper]{article} 
\usepackage{aaai20}  
\usepackage{times}  
\usepackage{helvet} 
\usepackage{courier}  
\usepackage[hyphens]{url}  
\usepackage{graphicx} 
\urlstyle{rm} 
\usepackage{graphicx}  
\frenchspacing  
\setlength{\pdfpagewidth}{8.5in}  
\setlength{\pdfpageheight}{11in}  

\usepackage[square]{natbib}
\usepackage[utf8]{inputenc} 
\usepackage[T1]{fontenc}    
\usepackage{hyperref}

\usepackage{booktabs}       
\usepackage{amsfonts}       
\usepackage{nicefrac}       
\usepackage{microtype}      

\usepackage{epsfig}
\usepackage{amsmath}
\usepackage{amssymb}
\usepackage{adjustbox}

\usepackage{amsthm}  
\usepackage{wasysym}
\usepackage[ruled,vlined,algo2e]{algorithm2e}
\providecommand{\SetAlgoLined}{\SetLine}
\usepackage{color}
\usepackage{dsfont}	
\usepackage{wrapfig}
\usepackage{footnote}
\usepackage{epstopdf}
\usepackage{enumitem}
\usepackage{subfigure}
\usepackage{caption}
\usepackage{comment}
\usepackage{multirow}

\usepackage{algorithm}
\usepackage{algorithmic}

\def\eg{\emph{e.g. }}
\def\ie{\emph{i.e. }}

\def\vs{\emph{vs. }}
\def\wrt{\emph{w.r.t. }}

\usepackage{pifont}

\DeclareMathOperator*{\argmin}{arg\,min}

\usepackage{mathtools}

\makeatletter
\newcommand*{\rom}[1]{\expandafter\@slowromancap\romannumeral #1@}
\makeatother
\makeatletter
\newcommand\footnoteref[1]{\protected@xdef\@thefnmark{\ref{#1}}\@footnotemark}
\makeatother
\newcommand{\bfsection}[1]{\vspace*{0.1cm}\noindent\textbf{#1.}}

\newtheorem{prop}{Proposition}

\pdfinfo{
	/Title (2020 Formatting Instructions for Authors Using LaTeX)
	/Author (AAAI Press Staff)}
\setcounter{secnumdepth}{0}  
\begin{document}

\title{Graph-Preserving Grid Layout: \\A Simple Graph Drawing Method for Graph Classification using CNNs}
\author{Yecheng Lyu\textsuperscript{\rm 1}\thanks{Work was done during an internship at MERL.}, Xinming Huang\textsuperscript{\rm 1} and Ziming Zhang\textsuperscript{\rm 2}\thanks{Corresponding author.}\\
\textsuperscript{\rm 1}Worcester Polytechnic Institute, \textsuperscript{\rm 2}Mitsubishi Electric Research Laboratories (MERL)\\
\texttt{\{ylyu,xhuang\}@wpi.edu,zzhang@merl.com}
}
\maketitle

\begin{abstract}
	
	Graph convolutional networks (GCNs) suffer from the irregularity of graphs, while more widely-used convolutional neural networks (CNNs) benefit from regular grids. 
	To bridge the gap between GCN and CNN, in contrast to previous works on generalizing the basic operations in CNNs to graph data, in this paper we address the problem of how to project undirected graphs onto the grid in a {\em principled} way where CNNs can be used as backbone for geometric deep learning. To this end, inspired by the literature of graph drawing we propose a novel graph-preserving grid layout (GPGL), an integer programming that minimizes the topological loss on the grid. Technically we propose solving GPGL approximately using a {\em regularized} Kamada-Kawai algorithm, a well-known nonconvex optimization technique in graph drawing, with a vertex separation penalty that improves the rounding performance on top of the solutions from relaxation. 
	Using GPGL we can easily conduct data augmentation as every local minimum will lead to a grid layout for the same graph.   
	Together with the help of multi-scale maxout CNNs, we demonstrate the empirical success of our method for graph classification. 
	
\end{abstract}

\section{Introduction}
Graph convolutional networks (GCNs) \citep{defferrard2016convolutional, kipf2016semi, hamilton2017inductive, bronstein2017geometric, chen2018fastgcn, gao2019graph, wu2019simplifying, wu2019comprehensive, morris2019weisfeiler} refer to the family of networks that generalize well-established convolutional neural networks (CNNs) to work on structured graphs. The irregularity of a graph, \eg number of nodes and degree of each node, however, brings significant challenges in learning deep models based on graph data, especially from the perspectives of architecture design and computation. In general, a GCN has two operations to compute feature representations for the nodes in a graph, namely aggregation and transformation. That is, the feature representation of a node is computed as an aggregate of the feature representations of its neighbors before it is transformed by applying the weights and activation functions. To deal with graph irregularity the adjacency matrix is fed into the aggregation function to encode the topology. Often the weights in the transformation function are shared by all the nodes in the neighborhood to handle the node orderlessness in the convolution.

On the other hand, CNNs have gained the reputation from success in many research areas such as speech \citep{abdel2014convolutional} and vision \citep{he2016deep} due to high accuracy and high efficiency that allows millions of parameters to be trained well, thanks to the regularity of the grids leading to tensor representations. The grid topology is naturally inherent in the convolution with optimized implementation, and thus no adjacency matrix is needed. The nodes in each local neighborhood are well ordered by location, and thus sharing weights among nodes is unnecessary. 

\setlength{\columnsep}{15pt}
\begin{wrapfigure}{r}{.3\linewidth}
	\vspace{-15pt}
	\begin{center}
		\includegraphics[width=\linewidth]{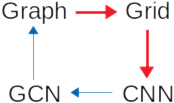}
		\vspace{-5mm}
		\caption{\footnotesize Illustration of two ways to handle graph data. The red path shows our methodology.}
		\label{fig:motivation}
	\end{center}
	\vspace{-15pt}
\end{wrapfigure}
\bfsection{Motivation}
Such key differences between graph convolution and spatial convolution make CNNs much easier and richer in deep architectures as well as being trained much more efficiently. Most of previous works in the literature such as GCNs, however, focus on the {\em blue} path in Fig. \ref{fig:motivation} that aim to generalize the basic operations in CNNs and apply the new operations to graph data at the cost of, for instance, higher running time. Intuitively there also exists another way to handle graph data, shown as the {\em red} path in Fig. \ref{fig:motivation}, that maps graphs onto grids so that CNNs can be applied directly to graph based applications. Such methods can easily inherit the benefits from CNNs such as computational efficiency. We notice, however, that there are few papers in the literature to explore this methodology for geometric deep learning (GDL). As a graph topology can be richer and more flexible than a grid, and often encode important message of the graph, how to preserve graph topology on the grid becomes a key challenge for such methods.

Therefore, in order to bridge the gap between GCN and CNN and in contrast to previous works on generalizing the basic operations in CNNs to graph input, in this paper we aim to address the following question: {\em Can we use CNNs as backbone for GDL by projecting undirected graphs onto the grid in a {\em principled} way to preserve graph topology?}


In fact, how to visualize structural information as graphs has been well studied in {\em graph drawing}, an area of mathematics and computer science combining methods from geometric graph theory and information visualization to derive 2D or 3D depictions of graphs with vertices and edges whose arrangement within a drawing affects its understandability, usability, fabrication cost, and aesthetics \citep{di1994algorithms}. In the literature, the {\em Kamada-Kawai (KK)} algorithm \citep{kamada1989algorithm} is one of the most widely-used undirected graph visualization techniques. In general, the KK algorithm defines an objective function that measures the energy of each graph layout, and searches for the (local) minimum. This often leads to a graph layout where adjacent nodes are near some pre-specified distance from each other, and non-adjacent nodes are well-spaced.


To the best of our knowledge, however, such graph drawing algorithms have never been explored for GDL. One possible reason is that graph drawing algorithms often work in continuous spaces, while our case requires {\em discrete} spaces (\ie grid) where CNNs can be deployed. Overall, how to project graphs onto the grid with topology preservation for GDL is still elusive in the literature.

\begin{figure}[t]
	\centering
	\includegraphics[width=\linewidth]{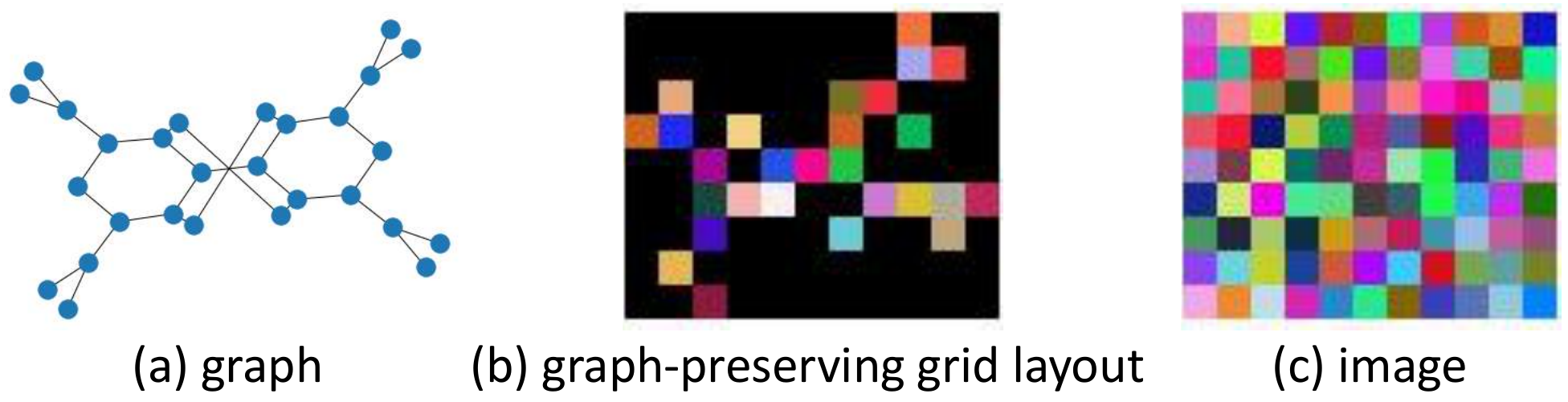}
	\vspace{-6mm}
	\caption{\footnotesize Illustration of differences between {\bf (a)} graph, {\bf (b)} our graph-preserving grid layout (GPGL) and {\bf (c)} image. The black color in (b) denotes no correspondence to any vertex in (a), and other colors denote non-zero features on the grid vertices. 
	}
	\label{fig:overview}
	\vspace{-3mm}
\end{figure}

\bfsection{Contributions}
To address the question above, in this paper we propose a novel {\em graph-preserving grid layout} (GPGL), an integer programming problem that minimizes the topological loss on the grid so that CNNs can be used for GDL on undirected graphs. Technically solving such a problem is very challenging because potentially one needs to solve a highly nonconvex optimization problem in a discrete space. We manage to do so by proposing a regularized KK method with a novel vertex separation penalty, followed by the rounding technique. As a result, our GPGL can manage to preserve the irregular structural information in a graph on the regular grid as graph layout, as illustrated in Fig. \ref{fig:overview}. 

In summary, our key contributions are twofold as follows:
\begin{itemize}
	\item We are the {\em first}, to the best of our knowledge, to explicitly explore the usage of graph drawing algorithms in the context of GDL, and accordingly propose a novel regularized Kamada-Kawai algorithm to project graphs onto the grid with minimum loss in topological information.
	\item We demonstrate the empirical success of GPGL on graph classification, with the help of affine-invariant CNNs.
\end{itemize}

\section{Related Work}
\bfsection{Graph Drawing \& Network Embedding}
Roughly speaking, graph drawing can be considered as a subdiscipline of network embedding \citep{hamilton2017representation, cui2018survey, cai2018comprehensive} whose goal is to find a low dimensional representation of the network nodes in some metric space so that the given similarity (or distance) function is preserved as much as possible. In summary, graph drawing focuses on the 2D/3D visualization of graphs \citep{dougrusoz1996circular, eiglsperger2001orthogonal, koren2005drawing, spielman2007spectral, tamassia2013handbook}, while network embedding emphasizes the learning of low dimensional graph representations. Despite the research goal, similar methodology has been applied to both areas. For instance, the KK algorithm \citep{kamada1989algorithm} was proposed for graph visualization as a force-based layout system with advantages such as good-quality results and strong theoretical foundations, but suffering from high computational cost and poor local minima. Similarly \cite{tenenbaum2000global} proposed a global geometric framework for network embedding to preserve the intrinsic geometry of the data as captured in the geodesic manifold distances between all pairs of data points. There are also some works on drawing graphs on lattice, \eg \citep{freese2004automated}.

In contrast to graph drawing, our focus is to project an existing graph onto the grid with minimum topological loss so that CNNs can be deployed efficiently and effectively to handle graph data. In such a context of GDL, we are not aware of any work that utilizes the graph drawing algorithms to facilitate GDL, to the best of our knowledge.

\bfsection{Graph Synthesis \& Generation}
Methods in this field, \eg \citep{grover2018graphite, li2018learning, you2018graphrnn, samanta2019nevae}, often aim to learn a (sophisticated) generative model that reflects the properties of the training graphs. Recently, \cite{kwon2019deep} proposed learning an encoder-decoder for the graph layout generation problem to systematically visualize a graph in diverse layouts using deep generative model. \cite{franceschi2019learning} proposed jointly learning the graph structure and the parameters of GCNs by approximately solving a bilevel program that learns a discrete probability distribution on the edges of the graph for classification problems.

In contrast to such methods above, our algorithm for GPGL is essentially a self-supervised learning algorithm that is performed for each individual graph and requires no training at all. Moreover, we focus on re-deploy each graph onto the grid as layout while preserving its topology. This procedure is separate from the training of CNNs later.

\begin{figure*}[t]
	\begin{minipage}[b]{0.245\textwidth}
		\begin{center}
			\centerline{\includegraphics[width=\columnwidth]{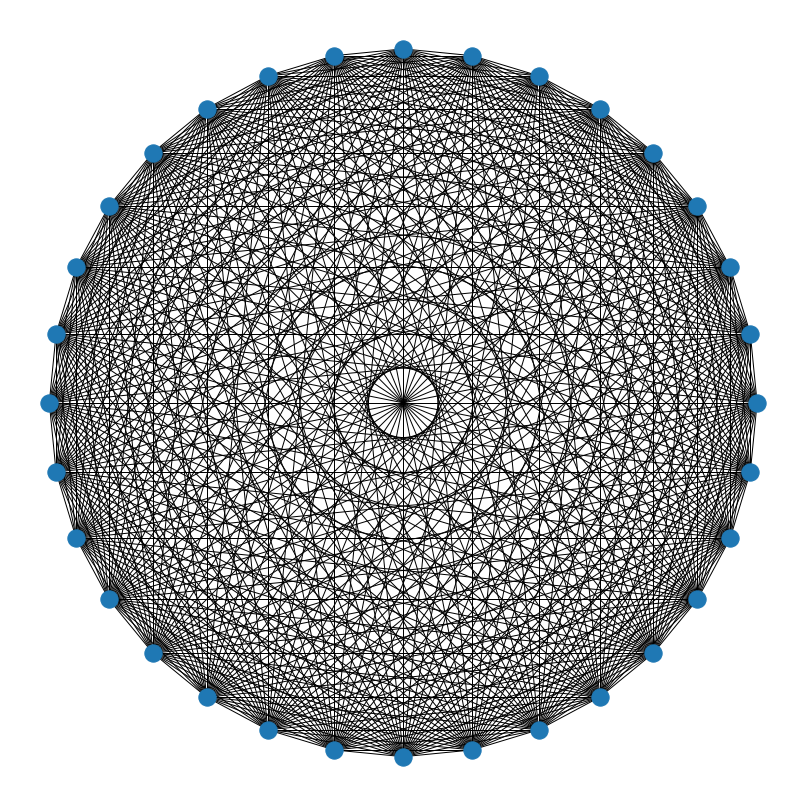}}
			\centerline{\footnotesize (a) KK before rounding}
		\end{center}
	\end{minipage}
	\hfill
	\begin{minipage}[b]{0.245\textwidth}
		\begin{center}
			\centerline{\includegraphics[width=\columnwidth]{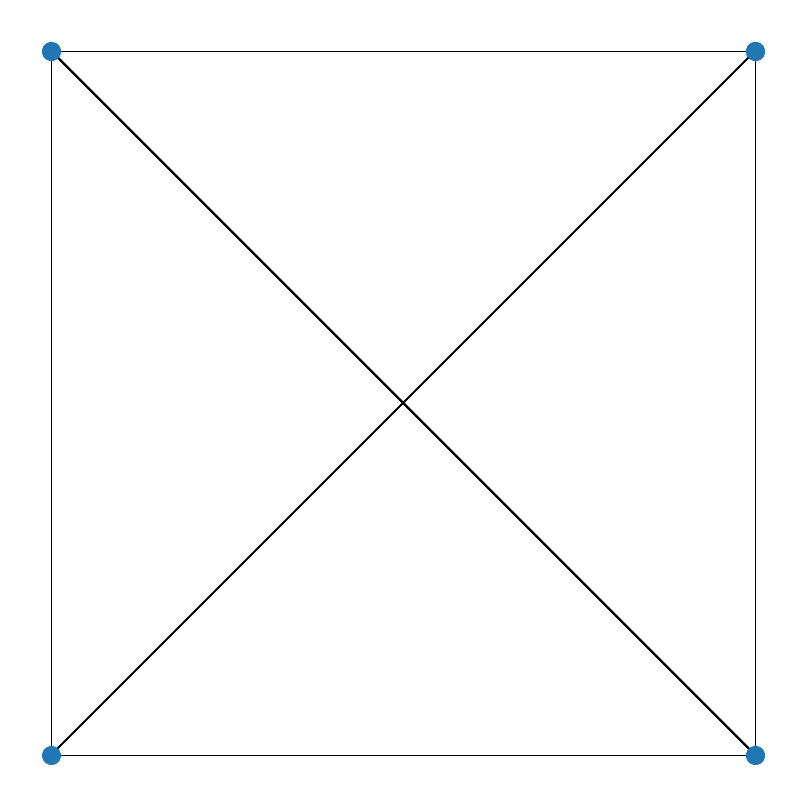}}
			\centerline{\footnotesize (b) KK after rounding}
		\end{center}
	\end{minipage}
	\hfill
	\begin{minipage}[b]{0.245\textwidth}
		\begin{center}
			\centerline{\includegraphics[width=\columnwidth]{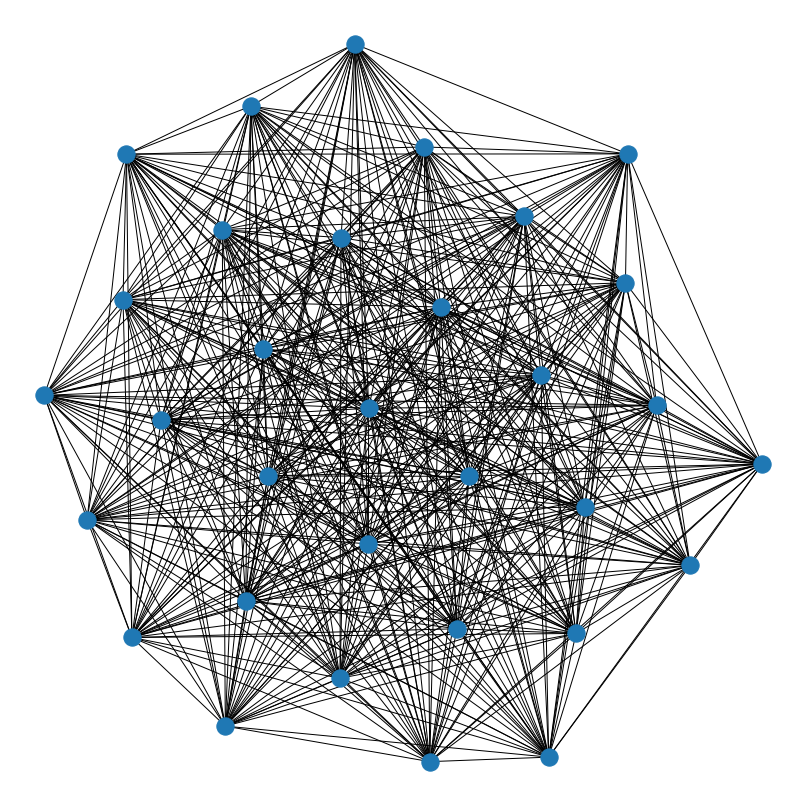}}
			\centerline{\footnotesize (c) Ours before rounding}
		\end{center}
	\end{minipage}
	\hfill
	\begin{minipage}[b]{0.245\textwidth}
		\begin{center}
			\centerline{\includegraphics[width=\columnwidth]{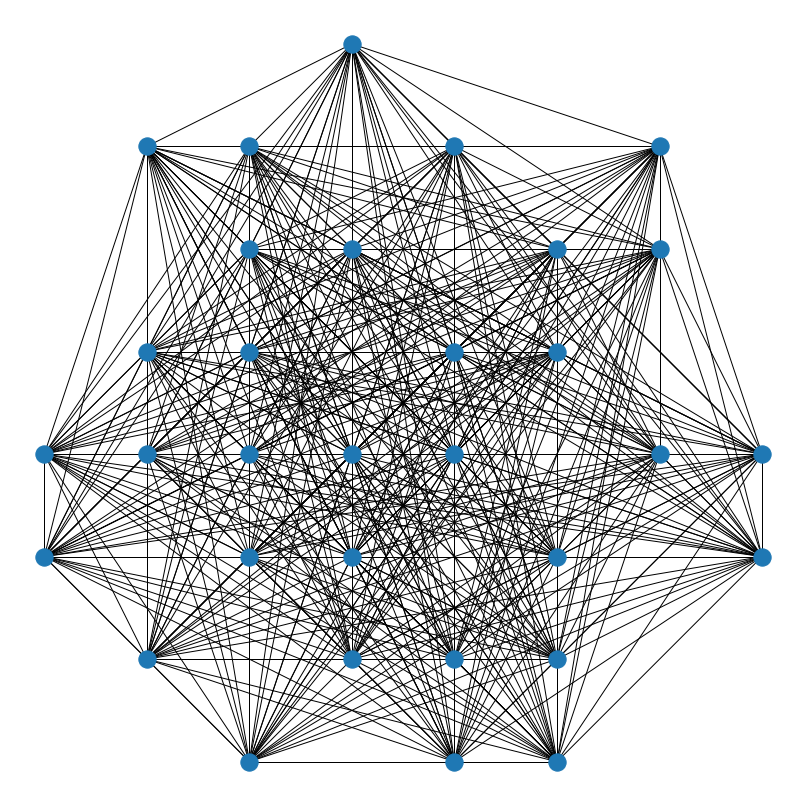}}
			\centerline{\footnotesize (d) Ours after rounding}
		\end{center}
	\end{minipage}
	\hfill
	\vspace{-5mm}
	\caption{\footnotesize Illustration of layout comparison between the KK algorithm and our proposed regularized KK algorithm before and after rounding based on a fully-connected graph with 32 vertices.}
	\label{fig:drawing_comparison}
	\vspace{-3mm}
\end{figure*}

\bfsection{Geometric Deep Learning}
In general GDL studies the extension of deep learning techniques to graph and manifold structured data (\eg \citep{kipf2016semi, bronstein2017geometric, monti2017geometric, huang2018building}). In particular in this paper we focus on graph data only. Broadly GDL methods can be categorized into spatial methods (\eg \citep{masci2015geodesic, boscaini2016learning, monti2017geometric}) and spectral methods (\eg \citep{defferrard2016convolutional, levie2018cayleynets, yi2017syncspeccnn}). Some nice survey on this topic can be found in \citep{bronstein2017geometric, zhang2018deep, hamilton2017representation, wu2019comprehensive}.

\cite{niepert2016learning} proposed a framework for learning convolutional neural networks by applying the convolution operations to the locally connected regions from graphs. We are different from such a work by applying CNNs to the grids where graphs are projected to with topology preservation.
\cite{tixier2017graph} proposed an ad-hoc method to project graphs onto 2D grid and utilized CNNs for graph classification. Specifically each node is embedded into a high dimensional space, then mapped to 2D space by PCA, and finally quantized into grid. In contrast, we propose a principled and systematical way based on graph drawing, \ie a nonconvex integer programming formulation, for mapping graphs onto 2D grid. Besides, our graph classification performance is much better than both works. On MUTAG and IMDB-B data sets we can achieve 90.42\% and 74.9\% test accuracy with 1.47\% improvement over \citep{niepert2016learning} and 4.5\% improvement over \citep{tixier2017graph}, respectively.


\section{Graph-Preserving Grid Layout}

\subsection{Problem Setup}
Let $\mathcal{G}=(\mathcal{V}, \mathcal{E})$ be an undirected graph with a vertex set $\mathcal{V}$ and an edge set $\mathcal{E}\subseteq\mathcal{V}\times\mathcal{V}$, and $s_{ij}\geq1, \forall i\neq j$ be the graph-theoretic distance such as shortest-path between two vertices $v_i, v_j\in\mathcal{V}$ on the graph that encodes the graph topology. 

Now we would like to learn a function $f:\mathcal{V}\rightarrow\mathbb{Z}^2$ to map the graph vertex set to a set of 2D integer coordinates on the grid so that the graph topology can be preserved as much as possible given a metric $d:\mathbb{R}^2\times\mathbb{R}^2\rightarrow\mathbb{R}$ and a loss $\ell:\mathbb{R}\times\mathbb{R}\rightarrow\mathbb{R}$. As a result, we are seeking for $f$ to minimize the following objective:
\begin{align}\label{eqn:general_formula}
\min_{f}\sum_{i\neq j}\ell(d(f(v_i), f(v_j)), s_{ij}).
\end{align}
Now letting $\mathbf{x}_i=f(v_i)\in\mathbb{Z}^2$ as reparametrization, we can rewrite Eq. \ref{eqn:general_formula} as the following {\em integer programming} problem:
\begin{align}\label{eqn:int_pro}
\min_{\mathcal{X}\subseteq\mathbb{Z}^2}\sum_{i\neq j}\ell(d(\mathbf{x}_i, \mathbf{x}_j), s_{ij}),
\end{align}
where the set $\mathcal{X}=\{\mathbf{x}_i\}$ denotes the {\em 2D grid layout} of the graph, \ie all the vertex coordinates on the 2D grid. 

\bfsection{Self-Supervision}
Note that the problem in Eq. \ref{eqn:int_pro} needs to be solved for each individual graph, which is related to self-supervision as a form of unsupervised learning where the data itself provides the supervision \citep{zisserman2018}. This property is beneficial for data augmentation, as every local minimum will lead to a grid layout for the same graph.

\bfsection{2D Grid Layout}
In this paper we are interested in learning only 2D grid layouts for graphs, rather than higher dimensional grids (even 3D) where we expect that the layouts would be more compact in volume and would have larger variance in configuration, both bringing more challenges into training CNNs properly later. We confirm our hypothesis based on empirical observations. Besides, the implementation of 3D basic operations in CNNs such as convolution and pooling are often slower than 2D counterparts, and the operations beyond 3D are not available publicly. 

\bfsection{Relaxation \& Rounding for Integer Programming}
Integer programming is NP-complete and thus finding exact solutions is challenging, in general \citep{wolsey2014integer}. Relaxation and rounding is a widely used heuristic for solving integer programming due to its efficiency \citep{bradley1977applied}, where the rounding operator is applied to the solution from the real-number relaxed problem as the solution for the integer programming. In this paper we employ this heuristic to learn 2D grid layouts. For simplicity, in the sequel we will only discuss how to solve the relaxation problem (\ie before rounding).

\subsection{Regularized Kamada-Kawai Algorithm}
In this paper we set $\ell$ and $d$ in Eq. \ref{eqn:int_pro} to the least-square loss and Euclidean distance to preserve topology, respectively. 

\bfsection{Kamada-Kawai Algorithm}
The KK graph drawing algorithm \citep{kamada1989algorithm} was designed for a (relaxed) problem in Eq. \ref{eqn:int_pro} with a specific objective function as follows:
\begin{align}\label{eqn:kk}
\min_{\mathcal{X}\subseteq\mathbb{R}^2}\mathcal{L}_{KK} = \sum_{i\neq j}\frac{1}{2}\left(\frac{d_{ij}}{s_{ij}} - 1\right)^2,
\end{align}
where $d_{ij}=\|\mathbf{x}_i - \mathbf{x}_j\|, \forall (i,j)$ denotes the Euclidean distance between vertices $v_i$ and $v_j$. Note that there is no regularization to control the distribution of nodes in 2D visualization. 

Fig. \ref{fig:drawing_comparison} illustrates the problems using the KK algorithm when projecting the fully-connected graph onto 2D grid. Eventually KK learns a circular distribution with equal space among the vertices as in Fig. \ref{fig:drawing_comparison}(a) to minimize the topology preserving loss in Eq.~\ref{eqn:kk}. When taking a close look at these 2D locations we find that after transformation all these locations are within the square area $[0,1]\times[0,1]$, leading to the square pattern in Fig. \ref{fig:drawing_comparison}(b) after rounding. Such behavior totally makes sense to KK because it does not care about the grid layout but only the topology preserving loss. However, our goal is not only to preserve the topology but also to make graphs visible on the 2D grid in terms of vertices.

\bfsection{Vertex Separation Penalty}
To this end, we propose a novel vertex separation penalty to regularize the vertex distribution on the grid. The intuition behind it is that when the minimum distance among all the vertex pairs is larger than a threshold, say 1, it will guarantee that after rounding every vertex will be mapped to a unique 2D location with no overlap. But when any distance is smaller than the threshold, it should be considered to enlarge the distance, otherwise, no penalty. Moreover, we expect that the penalties will grow faster than the change of distances and in such a way the vertices can be re-distributed more rapidly. Based on these considerations we propose the following regularizer:
\begin{align}\label{eqn:r_sep}
\mathcal{R}_{sep} = \lambda\sum_{i\neq j} \max\left\{0, \frac{\alpha}{d_{ij}} - 1\right\}, 
\end{align}
where $\alpha\geq0, \lambda\geq0$ are two predefined constants. From the gradient of $\mathcal{R}_{sep}$ \wrt an arbitrary 2D variable $\mathbf{x}_i$, that is,
\begin{align}
\frac{\partial\mathcal{R}_{sep}}{\partial\mathbf{x}_i} = -\lambda\sum_{i\neq j} \frac{\mathbf{x}_i - \mathbf{x}_j}{d_{ij}^3}\mathbf{1}_{\{d_{ij}<\alpha\}}
\end{align}
where $\mathbf{1}_{\{\cdot\}}$ denotes the indicator function returning 1 if the condition is true, otherwise 0, we can clearly see that $\alpha$ as a threshold controls when penalties occur, and $\lambda$ controls the trade-off between the loss and the regularization, leading to different step sizes in gradient based optimization.

\begin{algorithm}[t]\footnotesize
	\SetAlgoLined
	\SetKwInOut{Input}{Input}\SetKwInOut{Output}{Output}
	\Input{graph distance $\{s_{ij}\}$, parameters $\alpha, \lambda, \gamma$}
	\Output{2D grid layout $\mathcal{X}^*$}
	\BlankLine
	
	$\tilde{\mathcal{X}}\leftarrow\argmin_{\mathcal{X}}\mathcal{L}_{KK}$ with a (randomly shuffled) circular layout;
	
	$\beta\leftarrow\max(\gamma, \min_{\forall \mathbf{x}_i, \mathbf{x}_j\in\tilde{\mathcal{X}},  i\neq j}\|\mathbf{x}_i - \mathbf{x}_j\|)$;
	
	$\mathbf{x}_i\leftarrow\max(1, \frac{\alpha}{\beta})\cdot\mathbf{x}_i, \forall \mathbf{x}_i\in\tilde{\mathcal{X}}$\tcp*{optional: rescaling}
	
	$\mathcal{X}^*\leftarrow\argmin_{\mathcal{X}\subseteq\mathbb{R}^2}\mathcal{L}_{GPGL}$ with set $\{\mathbf{x}_i\}$ as initialization;
	
	$\mathcal{X}^*\leftarrow\mbox{rounding}(\mathcal{X}^*)$;
	
	\Return $\mathcal{X}^*$;
	\caption{Regularized Kamada-Kawai Algorithm}\label{alg:RKK_alg}
\end{algorithm}

\bfsection{Our Algorithm}
With the help of our new regularization, we propose a regularized KK algorithm as listed in Alg. \ref{alg:RKK_alg}, which tries to minimize the following regularized KK loss:
\begin{align}
    \min_{\mathcal{\mathcal{X}}\subseteq\mathbb{Z}^2}\mathcal{L}_{GPGL} = \mathcal{L}_{KK} + \mathcal{R}_{sep}.
\end{align}

\noindent
\underline{\em (1) Initialization:}
Note that both KK and our algorithms are highly nonconvex, and thus good initialization is need to make both work well, \ie convergence to good local minima. 

To this end, we first utilize the KK algorithm to generate a vertex distribution. To do so, we employ the implementation in the Python library {\sc NetworkX} which uses a circular layout as initialization by default. As discussed above, KK has no control on the vertex distribution. This may lead to serious {\bf vertex loss} problems in the 2D grid layout where some of vertices in the original graph merge together as a single vertex on the grid after rounding due to small distances. 

To overcome this problem, we introduce an optional step, {\em rescaling}, to enlarge the pairwise distances linearly. 
Intuitively this step ``zooms'' in the vertex distribution returned by KK when the minimum distance is smaller than the threshold but cannot be too small (controlled by parameter $\gamma\geq0$). In this way not only such vertices are still distributed well, but also there will be enough space among the vertices to search for a good local minimum for our algorithm.

By taking the scaled solution as initialization, we minimize the regularized KK loss, where the KK objective is used to preserve the graph topology and the regularizer monitors the distances to prevent the vertex loss on the grid. 

\noindent
\underline{\em (2) Topology Preservation with Regularization:}
As we observe, the key challenge in topology preservation comes from the node degree, and the lower degree the easier for preservation. Since there are only 8 neighbors at most in the 2D grid layout, it will induce a penalty for a graph vertex whose degree is higher than 8. Fig. \ref{fig:drawing_comparison} illustrates such a case where the original graph is full-connected with 32 vertices. With the help of our regularization, we manage to map this graph to a ball-like grid layout, as shown in Fig. \ref{fig:drawing_comparison}(c) and (d). 
\begin{prop}\label{prop:1}
	An ideal 2D grid layout with no vertex loss for a full-connected graph with $|\mathcal{V}|$ vertices is a ball-like shape with radius of $\lceil (\frac{|\mathcal{V}|}{\pi})^{\frac{1}{2}}\rceil$ that minimizes Eq.~\ref{eqn:int_pro} with relaxation of the regularized Kamada-Kawai loss. Here $\lceil\cdot\rceil$ denotes the ceiling operation.
\end{prop}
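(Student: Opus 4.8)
The plan is to specialise the regularised Kamada--Kawai objective to the complete graph and then read off its constrained grid minimiser from a lattice-point count. For $K_{|\mathcal{V}|}$ every graph-theoretic distance is $s_{ij}=1$, so by Eq.~\ref{eqn:kk} we have $\mathcal{L}_{KK}=\frac{1}{2}\sum_{i\neq j}(d_{ij}-1)^2$. ``No vertex loss'' forces the layout map to be injective, hence after rounding all vertices sit in distinct grid cells and $d_{ij}\geq 1$ for every pair. Since the separation penalty in Eq.~\ref{eqn:r_sep} exists precisely to rule out collisions, with its threshold taken at the collision scale ($\alpha\leq 1$) we get $\alpha/d_{ij}-1\leq 0$ on this feasible set, so $\mathcal{R}_{sep}=0$ and $\mathcal{L}_{GPGL}$ collapses to $\frac{1}{2}\sum_{i\neq j}(d_{ij}-1)^2$. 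On the relevant feasible set Prop.~\ref{prop:1} therefore reduces to the purely geometric claim: among all placements of $|\mathcal{V}|$ distinct points of $\mathbb{Z}^2$, the minimiser of $\sum_{i\neq j}(d_{ij}-1)^2$ is ball-like with radius $\lceil(|\mathcal{V}|/\pi)^{1/2}\rceil$.

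Next I would use that on $d_{ij}\geq 1$ each summand $(d_{ij}-1)^2$ is strictly increasing in $d_{ij}$, so the minimiser is the ``most compact'' set of $|\mathcal{V}|$ lattice points --- any layout whose sorted pairwise-distance vector is coordinatewise smaller can only do better. To make this quantitative, fix a vertex $v_i$ and order its distances to the others as $d_{i,(1)}\leq\dots\leq d_{i,(|\mathcal{V}|-1)}$; the closed disc of radius $d_{i,(k)}$ about $\mathbf{x}_i$ contains $\mathbf{x}_i$ together with at least $k$ further lattice points, so the Gauss circle estimate gives $k+1\leq\pi d_{i,(k)}^2+O(d_{i,(k)})$, i.e.\ $d_{i,(k)}\gtrsim\sqrt{k/\pi}$. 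Summing the increasing penalty against these bounds yields a layout-independent lower bound on $\mathcal{L}_{KK}$, and the same count shows that $|\mathcal{V}|$ distinct lattice points cannot be fitted inside a disc of radius below $\sqrt{|\mathcal{V}|/\pi}$, so the smallest integer radius of a disc that can still host them all is $\lceil(|\mathcal{V}|/\pi)^{1/2}\rceil$. For achievability I would place the $|\mathcal{V}|$ vertices on the lattice points closest to a common centre --- a digital ball of that radius --- and verify that its distance profile meets the lower bound up to the error terms, so no non-round configuration can beat it; rounding then leaves this ball as the relaxed minimiser of $\mathcal{L}_{GPGL}$ for $K_{|\mathcal{V}|}$, which is the assertion.

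The hard part is this last matching step: turning the soft ``most compact $\Rightarrow$ ball'' reasoning into an honest optimality statement requires controlling the $O(\cdot)$ term in the Gauss circle problem well enough that (i) a digital ball of radius $\lceil(|\mathcal{V}|/\pi)^{1/2}\rceil$ provably contains at least $|\mathcal{V}|$ lattice points for every $|\mathcal{V}|$ while no strictly smaller disc does, and (ii) the per-vertex lower bound is tight enough to exclude competing non-round shapes. This is a discrete isoperimetric-type fact about lattice point sets, and it is the reason the statement is phrased with ``ideal'' and ``ball-like'' rather than as an exact combinatorial identity.
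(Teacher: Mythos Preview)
Your reduction is the same as the paper's: for $K_{|\mathcal{V}|}$ one has $s_{ij}=1$, the no-vertex-loss constraint forces $d_{ij}\geq 1$, and then $\mathcal{R}_{sep}=0$, so the problem becomes minimising $\sum_{i\neq j}(d_{ij}-1)^2$ over injective grid placements. From that point on, however, the two arguments diverge.

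The paper does not attempt a static variational analysis. Instead it argues \emph{dynamically}: starting from the circular initialisation actually used in Alg.~\ref{alg:RKK_alg}, it interprets the gradient field of the objective as a central ``gravity'' pulling every vertex toward the common centre $A$, with each vertex behaving like a unit-diameter ball. The vertices therefore collapse inward until they pack into an $r$-disc, and the radius is read off from the crude area inequality $|\mathcal{V}|\leq\pi r^2$, giving $r=\lceil(|\mathcal{V}|/\pi)^{1/2}\rceil$. The paper explicitly calls the resulting configuration a local minimum of the flow, not a global optimiser over all lattice sets.

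Your route is a genuine alternative: you argue globally, via monotonicity of $(d-1)^2$ on $[1,\infty)$ and Gauss-circle lower bounds on the sorted distance profile, that the digital ball is the most compact admissible configuration and hence optimal. This is more ambitious than what the paper actually proves --- you are aiming at a global statement and are right to flag the matching step as the delicate part. The paper sidesteps that difficulty entirely by settling for a gradient-descent heuristic tied to the specific circular initialisation, which is weaker as mathematics but aligned with what the algorithm does in practice. Either argument supports the informal ``ideal / ball-like'' phrasing of the proposition; yours buys a cleaner optimality claim if the lattice-isoperimetric step can be closed, while the paper's buys a direct explanation of the observed behaviour of Alg.~\ref{alg:RKK_alg}.
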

\begin{proof}
	Given the conditions in the proposition above, we have $s_{ij}=1, d_{ij}\geq 1,  \forall i\neq j$ and $\mathcal{R}_{sep}=0$. Without loss of generalization, we uniformly deploy the graph vertices in a circle and set the circular center $A$ to a node on the 2D grid. Now imagine the gradient field over all the vertices as a sandglass centered at $A$ where each vertex is a ball with a unit diameter. Then it is easy to see that by the ``gravity'' (\ie gradient) all the vertices move towards the center $A$, and eventually are stabilized (as a local minimum) within an $r$-radius circle whose covering area should satisfy $|\mathcal{V}|\leq \pi r^2$, \ie $r=\lceil (\frac{|\mathcal{V}|}{\pi})^{\frac{1}{2}}\rceil$ as the smallest sufficient radius to cover all the vertices with guarantee. We now complete our proof.
\end{proof}

Note that Fig. \ref{fig:drawing_comparison}(d) exactly verifies Prop. \ref{prop:1} with a radius $r=\lceil (\frac{32}{\pi})^{\frac{1}{2}}\rceil=4$. In summary, our algorithm can manage to preserve graph topology on the 2D grid even when the node degree is higher than 8.

\noindent
\underline{\em (3) Computational Complexity:} 
The KK algorithm has the computational complexity of, at least, $O(|\mathcal{V}|^2)$ \citep{kobourov2012spring} that limits the usage of KK to medium-size graphs (\eg 50-500 vertices). Since our algorithm in Alg. \ref{alg:RKK_alg} is based on KK, it unfortunately inherits this limitation as well. To accelerate the computation for large-scale graphs, we potentially can adopt the strategy in multi-scale graph drawing algorithms such as \citep{harel2000fast}. However, such an extension is out of scope of this paper, and we will consider it in our future work.

\subsection{Graph Classification}
\bfsection{Data Augmentation}
As mentioned in self-supervision, each local minimum from our regularized KK algorithm in Alg. \ref{alg:RKK_alg} will lead to a grid layout for the graph, while each minimum depends on its initialization. Therefore, to augment grid layout data from graphs, we do a random shuffle on the circular layout when applying Alg. \ref{alg:RKK_alg} to an individual graph. 

\bfsection{Grid-Layout based 3D Representation}
Once a grid layout is generated, we first crop the layout with a sufficiently large fixed-size window (\eg $64\times 64$), and then associate each vertex feature vector from the graph with the projected node within the window. All the layouts are aligned to the top-left corner of the window. The rest of nodes in the window with no association of feature vectors are assigned to zero vectors. 

Once vertex loss occurs, we take an average, by default, of all the vertex feature vectors (\ie average-pooling) and assign it to the grid node. We also compare average-pooling with max-pooling for merging vertices, and observe similar performance empirically in terms of classification.

\begin{figure}[t]
	\centering
	\includegraphics[width=\linewidth]{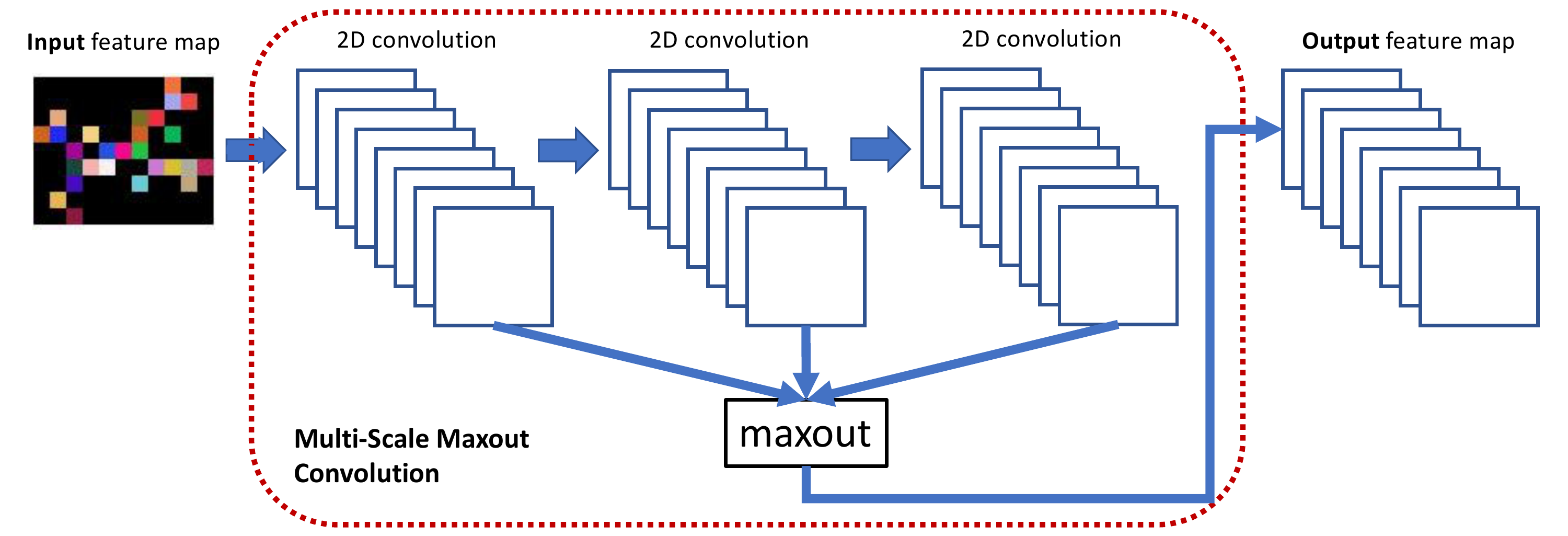}
	\vspace{-6mm}
	\caption{\footnotesize Multi-scale maxout convolution (MSM-Conv).
	}
	\label{fig:mm_cnn}
	\vspace{-3mm}
\end{figure}

\begin{figure*}[t]
	\begin{minipage}[b]{0.195\textwidth}
		\begin{center}
			\centerline{\includegraphics[width=\columnwidth]{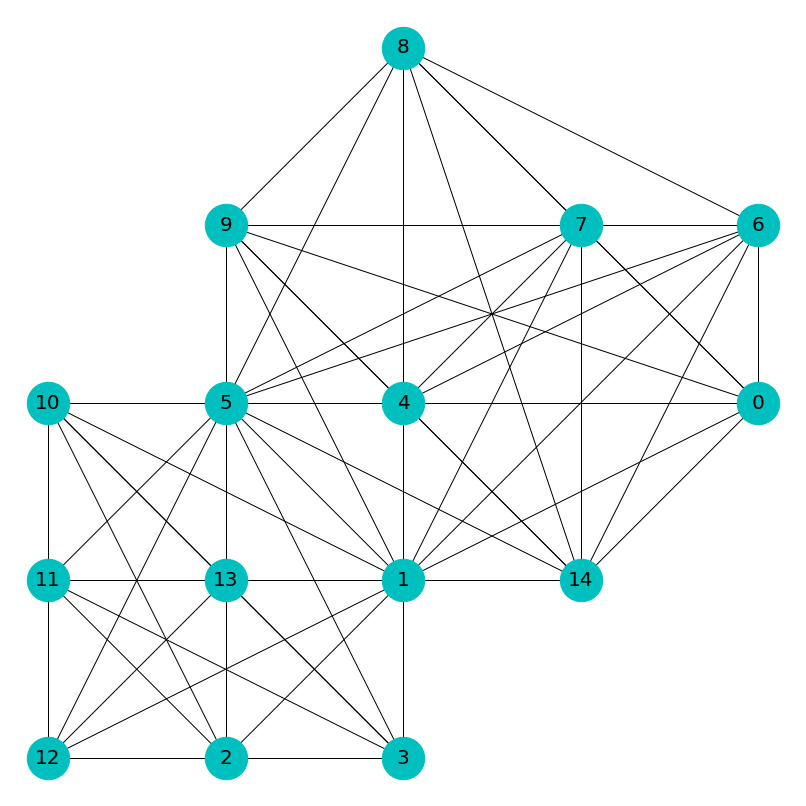}}
			\centerline{\footnotesize (a) $\alpha=1.00, \lambda=1000$}
		\end{center}
	\end{minipage}
	\hfill
	\begin{minipage}[b]{0.195\textwidth}
		\begin{center}
			\centerline{\includegraphics[width=\columnwidth]{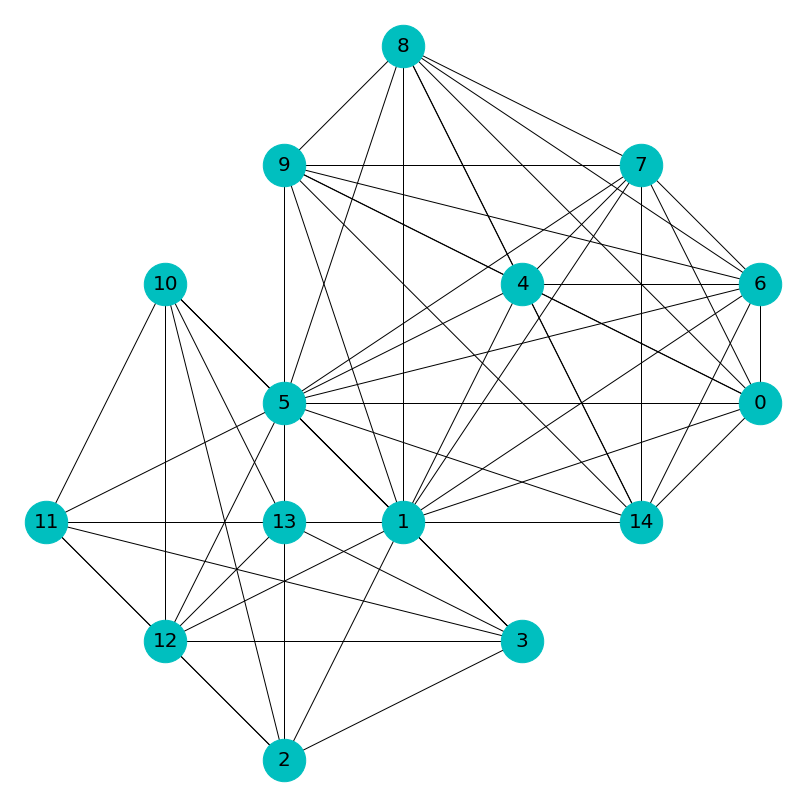}}
			\centerline{\footnotesize (b) $\alpha=1.50, \lambda=1000$}
		\end{center}
	\end{minipage}
	\hfill
	\begin{minipage}[b]{0.195\textwidth}
		\begin{center}
			\centerline{\includegraphics[width=\columnwidth]{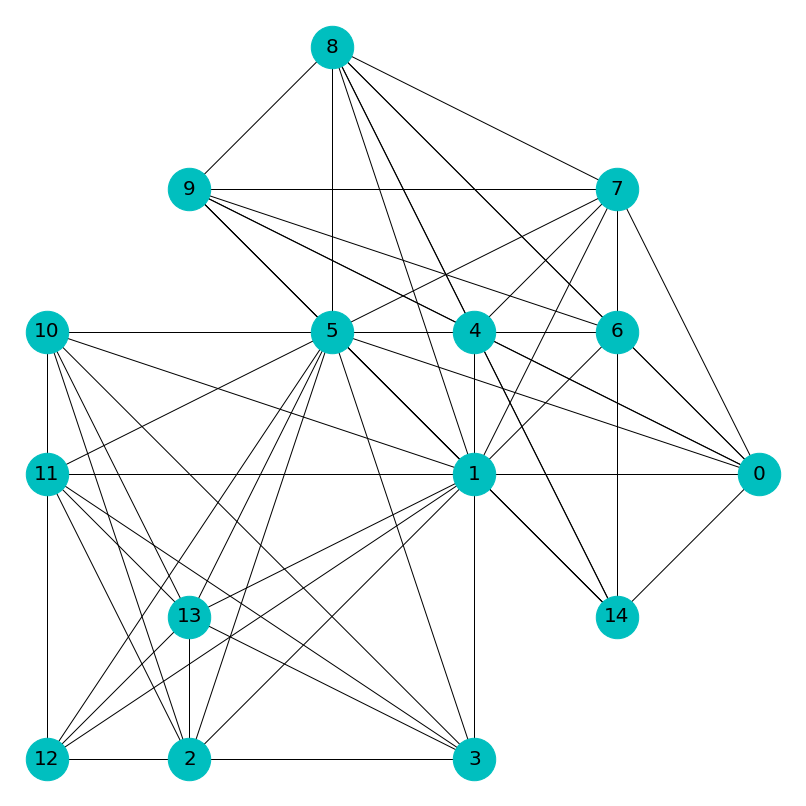}}
			\centerline{\footnotesize (c) $\alpha=1.25, \lambda=1000$}
		\end{center}
	\end{minipage}
	\hfill
	\begin{minipage}[b]{0.195\textwidth}
		\begin{center}
			\centerline{\includegraphics[width=\columnwidth]{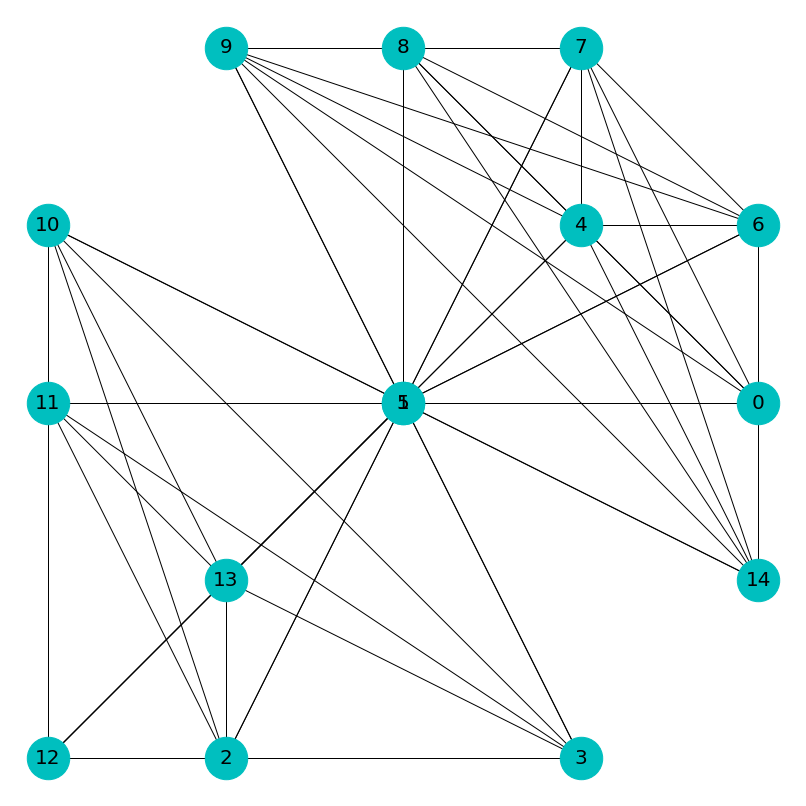}}
			\centerline{\footnotesize (d) $\alpha=1.25, \lambda=200$}
		\end{center}
	\end{minipage}
	\hfill
	\begin{minipage}[b]{0.195\textwidth}
		\begin{center}
			\centerline{\includegraphics[width=\columnwidth]{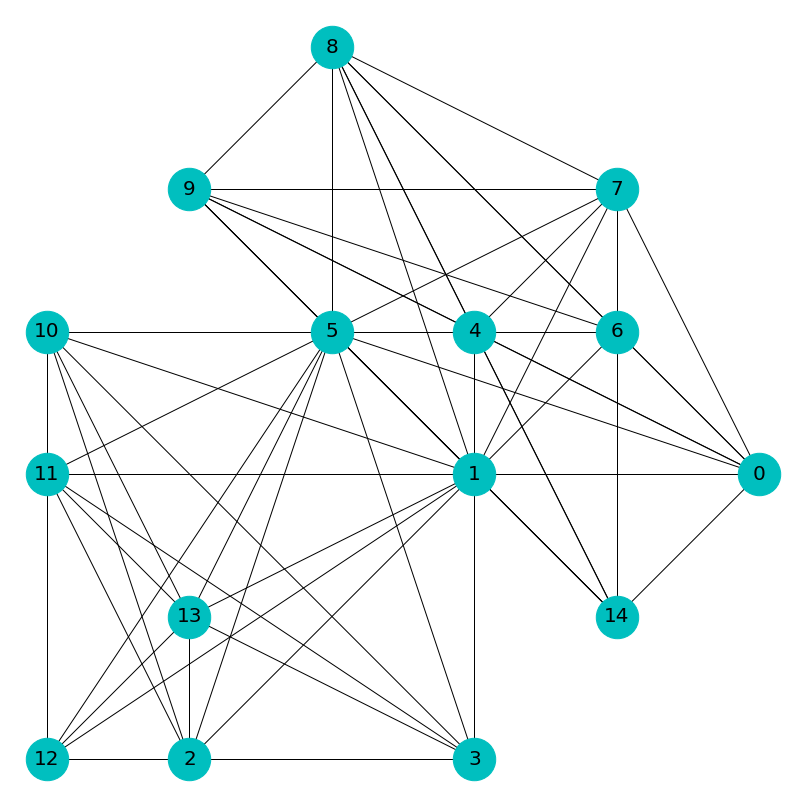}}
			\centerline{\footnotesize (e) $\alpha=1.25, \lambda=5000$}
		\end{center}
	\end{minipage}
	\hfill
	\vspace{-5mm}
	\caption{\footnotesize Illustration of effects of different combinations of $\alpha, \lambda$ on grid layout generation (IMDB-B)}
	\label{fig:alpha_lambda}
\end{figure*}
\begin{figure*}[t]
	\begin{minipage}[b]{0.325\textwidth}
		\begin{center}
			\centerline{\includegraphics[width=\columnwidth]{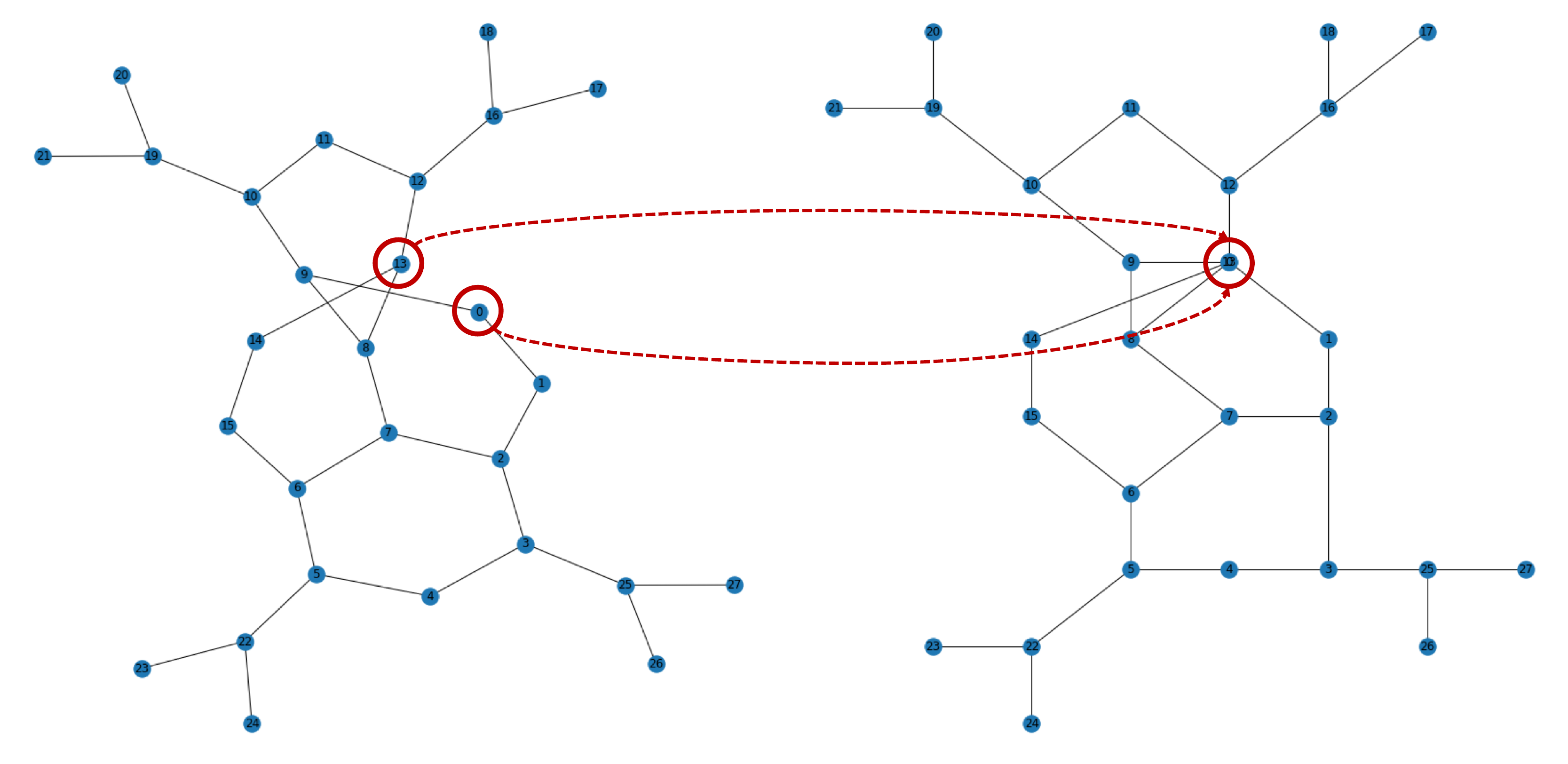}}
			\centerline{\footnotesize (a) MUTAG}
		\end{center}
	\end{minipage}
	\hfill
	\begin{minipage}[b]{0.325\textwidth}
		\begin{center}
			\centerline{\includegraphics[width=\columnwidth]{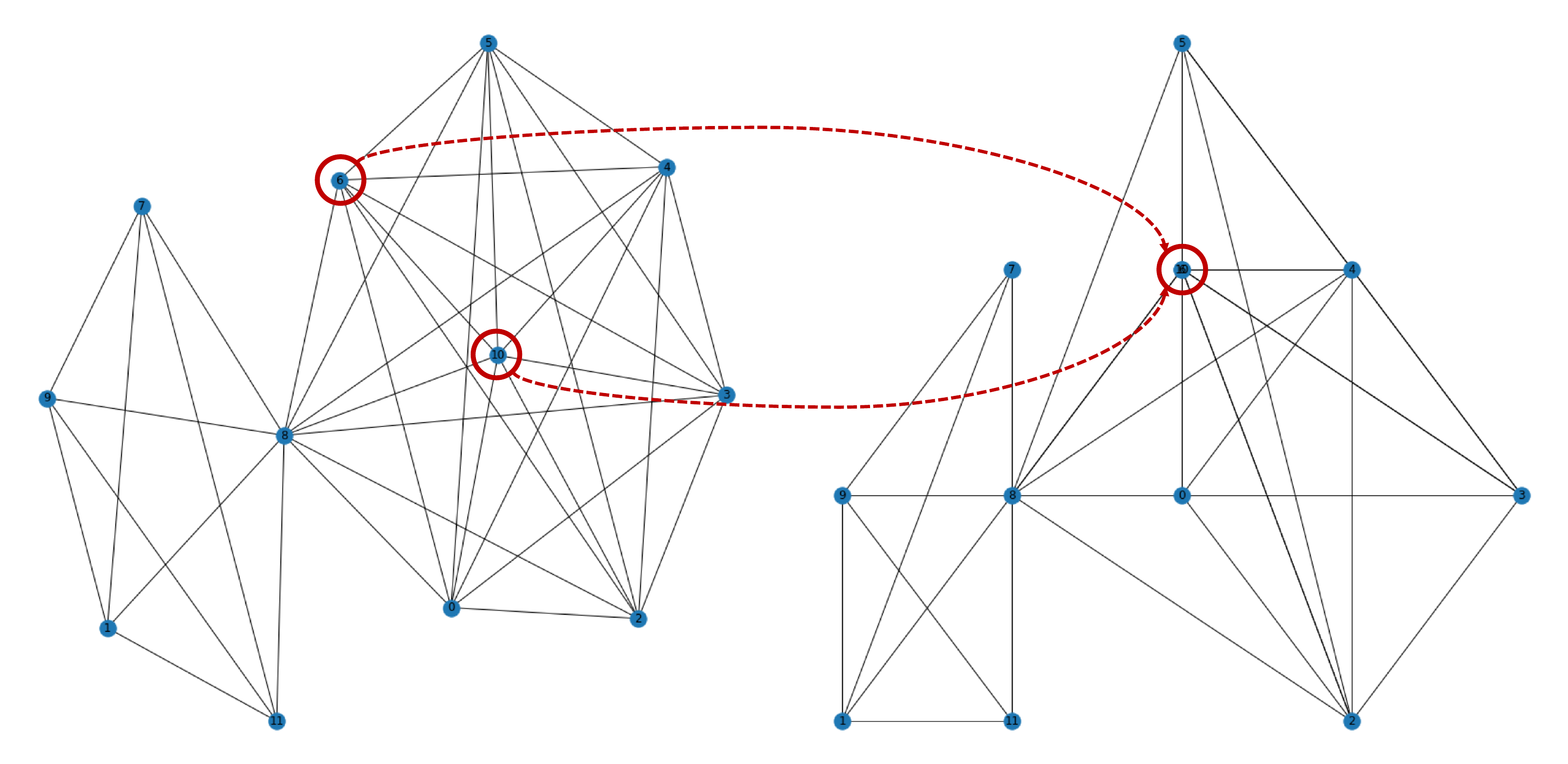}}
			\centerline{\footnotesize (b) IMDB-B}
		\end{center}
	\end{minipage}
	\hfill
	\begin{minipage}[b]{0.325\textwidth}
		\begin{center}
			\centerline{\includegraphics[width=\columnwidth]{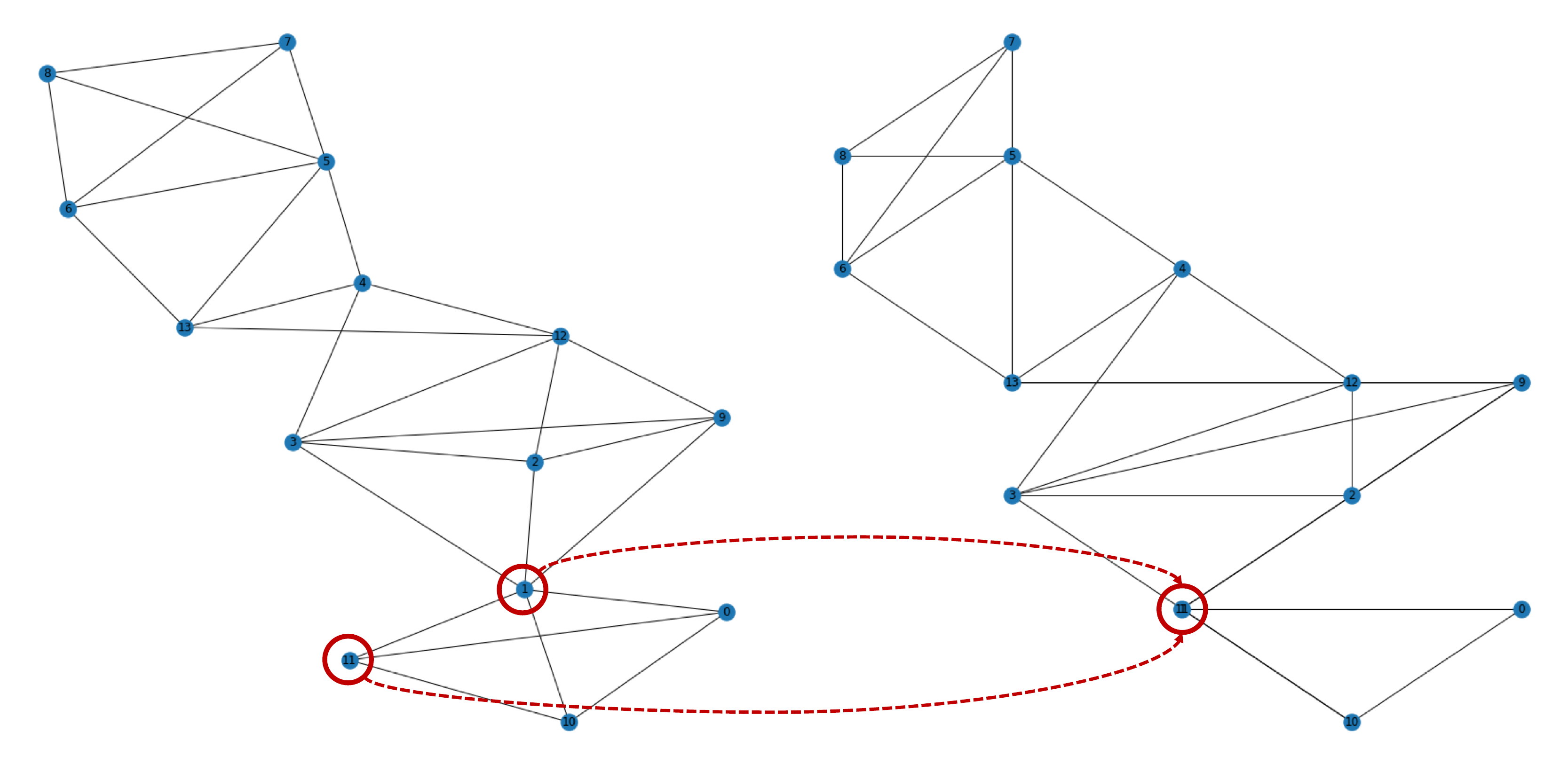}}
			\centerline{\footnotesize (c) PROTEINS}
		\end{center}
	\end{minipage}
	\hfill
	\vspace{-5mm}
	\caption{\footnotesize Illustration of vertex loss on different data sets: In each subfigure, {\bf (left)} before rounding and {\bf (right)} after rounding.}
	\label{fig:vertex_loss}
\end{figure*}
\begin{figure*}[t]
	\begin{minipage}[b]{0.245\textwidth}
		\begin{center}
			\centerline{\includegraphics[width=\columnwidth]{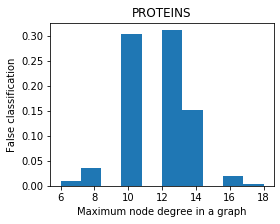}}
		\end{center}
	\end{minipage}
	\hfill
	\begin{minipage}[b]{0.245\textwidth}
		\begin{center}
			\centerline{\includegraphics[width=\columnwidth]{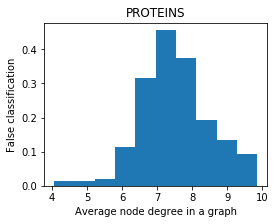}}
		\end{center}
	\end{minipage}
	\hfill
	\begin{minipage}[b]{0.245\textwidth}
		\begin{center}
			\centerline{\includegraphics[width=\columnwidth]{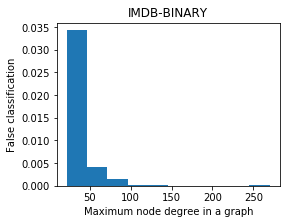}}
		\end{center}
	\end{minipage}
	\hfill
	\begin{minipage}[b]{0.245\textwidth}
		\begin{center}
			\centerline{\includegraphics[width=\columnwidth]{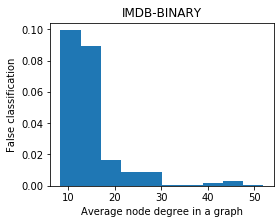}}
		\end{center}
	\end{minipage}
	\hfill
	\begin{minipage}[b]{0.245\textwidth}
		\begin{center}
			\centerline{\includegraphics[width=\columnwidth]{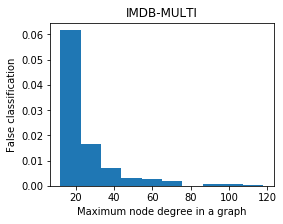}}
		\end{center}
	\end{minipage}
	\hfill
	\begin{minipage}[b]{0.245\textwidth}
		\begin{center}
			\centerline{\includegraphics[width=\columnwidth]{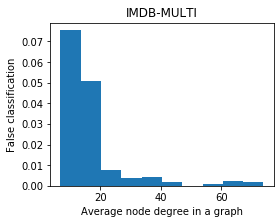}}
		\end{center}
	\end{minipage}
	\hfill
	\begin{minipage}[b]{0.245\textwidth}
		\begin{center}
			\centerline{\includegraphics[width=\columnwidth]{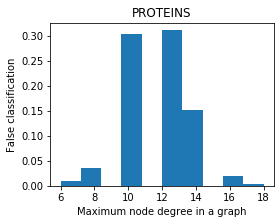}}
		\end{center}
	\end{minipage}
	\hfill
	\begin{minipage}[b]{0.245\textwidth}
		\begin{center}
			\centerline{\includegraphics[width=\columnwidth]{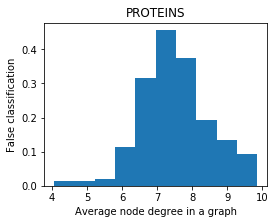}}
		\end{center}
	\end{minipage}
	\hfill	
	\vspace{-5mm}
	\caption{\footnotesize Graph topology in terms of node degree \vs misclassification on different data sets. The y-axis denotes the fractions of misclassification in the graphs with the same node degrees.}
	\label{fig:miscls}
	\vspace{-3mm}
\end{figure*}

\bfsection{Classifier: Multi-Scale Maxout CNNs (MSM-CNNs)}
We apply CNNs to the 3D representations of graphs for classification. As we discussed above, once the node degree is higher than 8, the grid layout cannot fully preserve the topology, but rather tends to form a ball-like compact pattern with larger neighborhood. To capture such neighborhood information effectively, the kernel sizes in the 2D convolution need to vary. Therefore, the problem now boils down to a feature selection problem with convolutional kernels.

Considering these, here we propose using a multi-scale maxout CNN as illustrated in Fig. \ref{fig:mm_cnn}. We use consecutive convolutions with smaller kernels to approximate the convolutions with larger kernels. For instance, we use two $3\times3$ kernels to approximate a $5\times5$ kernel. The maxout \citep{goodfellow2013maxout} operation selects which scale per grid node is good for classification and outputs the corresponding features. Together with other CNN operations such as max-pooling, we can design deep networks, if necessary.

\bfsection{Graph Label Prediction by Majority Voting}
In our experiments we perform 10-fold leave-one-out testing. For simplicity we conduct data augmentation once on all the graphs. Therefore, even a test graph still has multiple grid layouts. By default we use the majority voting mechanism to predict the graph label based on all the predicted labels from its grid layouts. We also try to randomly sample a grid layout and utilize its label as the graph label. We find that statistically there is marginal difference between the two prediction mechanisms. In fact, we observe that the accuracy of graph prediction is very close to that of grid layout prediction.

\section{Experiments}
\bfsection{Data Sets}
We evaluate our method, \ie graph-preserving grid layout + (multi-scale maxout) CNNs, on four medium-size benchmark data sets for graph classification, namely MUTAG, IMDB-B, IMDB-M and PROTEINS. Table \ref{tab:statistics} summarizes some statistics of each data set. Note that the max node degree on each data set is at least 8, indicating that ball-like patterns as discussed in Prop. \ref{prop:1} may occur, especially for IMDB-B and IMDB-M.

\begin{table}[t]
	\caption{\footnotesize Statistics of benchmark data sets for graph classification.}
	\label{tab:statistics}
	\vspace{-2mm}
	\adjustbox{width=\columnwidth}{
		\centering
		\begin{tabular}{c|ccccccc}
			\toprule
			Data Set & \begin{tabular}[c]{@{}c@{}}Num. of\\Graph\end{tabular} & \begin{tabular}[c]{@{}c@{}}Num. of\\Class\end{tabular} & \begin{tabular}[c]{@{}c@{}}Avg.\\Node\end{tabular} & \begin{tabular}[c]{@{}c@{}}Avg.\\Edge\end{tabular} & \begin{tabular}[c]{@{}c@{}}Avg.\\Degree\end{tabular} & \begin{tabular}[c]{@{}c@{}}Max\\Degree\end{tabular} & \begin{tabular}[c]{@{}c@{}}Feat.\\Dim.\end{tabular} \\
			\midrule
			MUTAG & 188 & 2 & 17.93 & 19.79 & 1.10 & 8 & 7 \\
			IMDB-B & 1000 & 2 & 19.77 & 96.53 & 4.88 & 270 & 136 \\
			IMDB-M & 1500 & 3 & 13.00 & 65.94 & 5.07 & 176 & 89 \\
			PROTEINS & 1113 & 2 & 39.06 & 72.82 & 1.86 & 50 & 3 \\
			\bottomrule
		\end{tabular}
	}
\end{table}

\bfsection{Implementation}
By default, we set the parameters in Alg. \ref{alg:RKK_alg} as $\alpha=1.25, \lambda=1000$. In this paper we do not use rescaling because we find that the initial KK solutions are sufficient to learn good grid layouts. We crop all the grid layouts to a fixed-size $64\times 64$ window.

Also by default, for the MSM-CNNs we utilize three consecutive $3\times3$ kernels in the MSM-Conv, and design a simple network of ``MSM-Conv(64)$\rightarrow$max-pooling$\rightarrow$MSM-Conv(128)$\rightarrow$max-pooling$\rightarrow$MSM-Conv(256)$\rightarrow$global-pooling$\rightarrow$FC(256)$\rightarrow$FC(128)'' as hidden layers with ReLU activations, where FC denotes a fully connected layer and the numbers in the brackets denote the numbers of channels in each layer. We employ Adam \citep{kingma2014adam} as our optimizer, and set batch size, learning rate, and dropout ratio to 10, 0.0001, and 0.3, respectively.

\subsection{Ablation Study}

\bfsection{Effects of $\alpha, \lambda$ on Grid Layout and Classification}
To understand their effects on grid layout generation, we visualize some results in Fig. \ref{fig:alpha_lambda} using different combinations of $\alpha, \lambda$. We can see that:
\begin{itemize}
	\item From Fig. \ref{fig:alpha_lambda}(a)-(c), the diameters of grid layouts are $5\times5$, $6\times6$, $7\times7$ for $\alpha=1.00, 1.25, 1.50$, respectively. This strongly indicates that a smaller $\alpha$ tends to lead to a more compact layout at the risk of losing vertices.
	\item From Fig. \ref{fig:alpha_lambda}(c)-(e), similarly the diameters of grid layouts are $5\times5$, $6\times6$, $6\times6$ for $\lambda=200, 1000, 5000$, respectively. This indicates that a smaller $\lambda$ tends to lead to a more compact layout at the risk of losing vertices as well. In fact in Fig. \ref{fig:alpha_lambda}(d) node 1 and node 5 are merged together. When $\lambda$ is sufficiently large, the layout tends to be stable.
\end{itemize}
Such observations follow our intuition in designing Alg. \ref{alg:RKK_alg}, and occur across all the four benchmark data sets.

We also test the effects on classification performance. For instance, we generate 21x grid layouts using data augmentation on MUTAG, and list our results in Table \ref{tab:alpha_lambda}. Clearly our default setting achieves the best test accuracy. It seems that for classification parameter $\alpha$ is more important. 

\begin{table}[t]
	\caption{\footnotesize Mean accuracy (\%) using different combinations of $\alpha, \lambda$.}
	\label{tab:alpha_lambda}
	\vspace{-2mm}
	\adjustbox{width=\columnwidth}{
		\centering
		\begin{tabular}{c|ccccc}
			\toprule
			Data Set & \begin{tabular}[c]{@{}c@{}}$\alpha=1.00$\\$\lambda=1000$\end{tabular} & \begin{tabular}[c]{@{}c@{}}$\alpha=1.50$\\$\lambda=1000$\end{tabular} & \begin{tabular}[c]{@{}c@{}}$\alpha=1.25$\\$\lambda=1000$\end{tabular} & \begin{tabular}[c]{@{}c@{}}$\alpha=1.25$\\$\lambda=200$\end{tabular} & \begin{tabular}[c]{@{}c@{}}$\alpha=1.25$\\$\lambda=5000$\end{tabular} \\
			\midrule
			MUTAG (21x) & 85.14 & 83.04 & 86.31 & 85.26 & 85.26 \\
			\bottomrule
		\end{tabular}
	}
	\vspace{-3mm}
\end{table}

\bfsection{Vertex Loss, Graph Topology \& Misclassification}
To better understand the problem of vertex loss, we visualize some cases in Fig. \ref{fig:vertex_loss}. The reason for this behavior is due to the small distances among the vertices returned by Alg. \ref{alg:RKK_alg} that cannot survive from rounding. Unfortunately we do not observe a pattern on when such loss will happen. Note that our Alg. \ref{alg:RKK_alg} cannot avoid vertex loss with guarantee, and in fact the vertex loss ratio on each data set is very low, as shown in Table \ref{tab:vertex_loss_ratio}.

\begin{table}[t]
	\caption{\footnotesize Vertex loss ratio (\%) on each data set.}
	\label{tab:vertex_loss_ratio}
	\vspace{-2mm}
	\adjustbox{width=\columnwidth}{
		\centering
		\begin{tabular}{c|cccc}
			\toprule
			Data Set & MUTAG & IMDB-B & IMDB-M & PROTEINS
			\\
			\midrule
			Vertex Loss & 1.06 & 0.99 & 0.40 & 0.90 \\
			\bottomrule
		\end{tabular}
	}
\end{table}

\begin{table}[t]
	\caption{\footnotesize Ratios (\%) between vertex loss and misclassification.}
	\label{tab:vertex_loss_miscls}
	\vspace{-2mm}
	\adjustbox{width=\columnwidth}{
		\centering
		\begin{tabular}{c|ccc}
			\toprule
			Data Set & \begin{tabular}[c]{@{}c@{}}$\underline{|\mathcal{G}_{v.l.}|}$\\$|\mathcal{G}_{mis.}|$\end{tabular} & \begin{tabular}[c]{@{}c@{}}$\underline{|\mathcal{G}_{v.l.}\cap\mathcal{G}_{mis.}|}$\\$|\mathcal{G}_{v.l.}|$\end{tabular} & \begin{tabular}[c]{@{}c@{}}$\underline{|\mathcal{G}_{n.v.l.}\cap\mathcal{G}_{mis.}|}$\\$|\mathcal{G}_{n.v.l.}|$\end{tabular}
			\\
			\midrule
			MUTAG (21x) & 1.06 & 20.00 & 16.70 \\
			IMDB-B (3x) & 0.99 & 16.18 & 37.41 \\
			PROTEINS (3x) & 0.90 & 24.32 & 29.89 \\
			\bottomrule
		\end{tabular}
	}
	\vspace{-9mm}
\end{table}

Further we test the relationship between vertex loss and misclassification, and list our results in Table \ref{tab:vertex_loss_miscls} where $\mathcal{G}_{v.l.}$, $\mathcal{G}_{n.v.l.}$, and $\mathcal{G}_{mis.}$ denote the sets of graphs with vertex loss, no vertex loss, and misclassification, respectively, $\cap$ denotes the intersection of two sets, $|\cdot|$ denotes the cardinality of the set, and the numbers in the brackets denote the numbers of grid layouts per graph in data augmentation. From this table, we can deduce that vertex loss cannot be not the key reason for misclassification, because it takes only tiny portion in misclassification and the ratios of misclassified graphs with/without vertex loss are very similar, indicating that misclassification more depends on the classifier rather than vertex loss.

Next we test the relationship between graph topology in terms of node degree and misclassification, and show our results in Fig. \ref{fig:miscls}. As discussed before, a larger node degree is more difficult for preserving topology. In this test we would like to verify whether such topology loss introduces misclassification. Compared with the statistics in Table \ref{tab:statistics}, it seems that topology loss does cause trouble in classification. One of the reasons may be that the variance of the grid layout for a vertex with larger node degree will be higher due to perturbation. Designing better CNNs will be one of our future works to improve the performance.

\bfsection{CNN based Classifier Comparison}
We test the effectiveness of our MSM-CNNs, compared with $(1\times1)$-kernel counterpart and ResNet-50 \citep{he2016deep}, using the same augmented data. On MUTAG in terms of mean test accuracy, MSM-CNNs can achieve 89.34\%, while $(1\times1)$-kernel counterpart and ResNet-50 achieve 86.79\% and 86.78\%, respectively. Similar observations are made on the other data sets. Therefore, the feature selection mechanism seems very useful in graph classification with our grid layout algorithm.

\setlength{\columnsep}{15pt}
\begin{wrapfigure}{r}{.5\linewidth}
	\vspace{-20pt}
	\begin{center}
		\includegraphics[width=\linewidth]{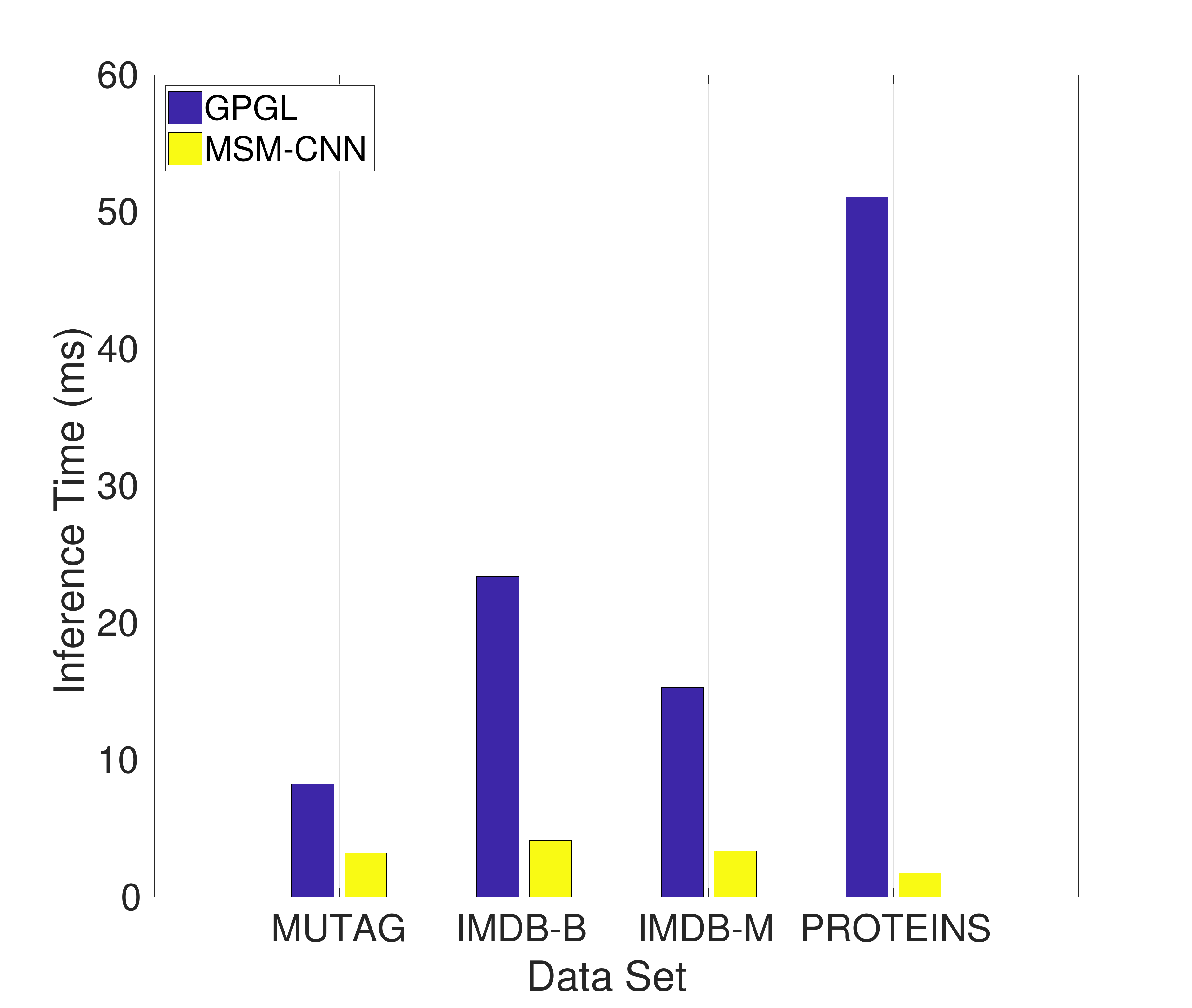}
		\vspace{-5mm}
		\caption{\footnotesize Running time at inference for GPGL and MSM-CNN.}
		\label{fig:running_time}
	\end{center}
	\vspace{-10pt}
\end{wrapfigure}
\bfsection{Running Time}
To verify the running time of our method, we show the inference time in Fig. \ref{fig:running_time} for both GPGL and MSM-CNN, respectively. GPGL is optimized on an Intel Core i7-7700K CPU and MSM-CNN is run on a GTX 1080 GPU. We do not show the training time for MSM-CNN because it highly depends on the number of training samples as well, but should be linearly proportional to inference time, roughly speaking. As we see, the inference time for MSM-CNN is roughly the same, and much less than that for GPGL. The running time of GPGL on PROTEINS is the highest due to its largest average number of nodes among the four data sets, which is consistent with our complexity analysis.

\setlength{\columnsep}{15pt}
\begin{wrapfigure}{r}{.5\linewidth}
	\vspace{-15pt}
	\begin{center}
		\includegraphics[width=\linewidth]{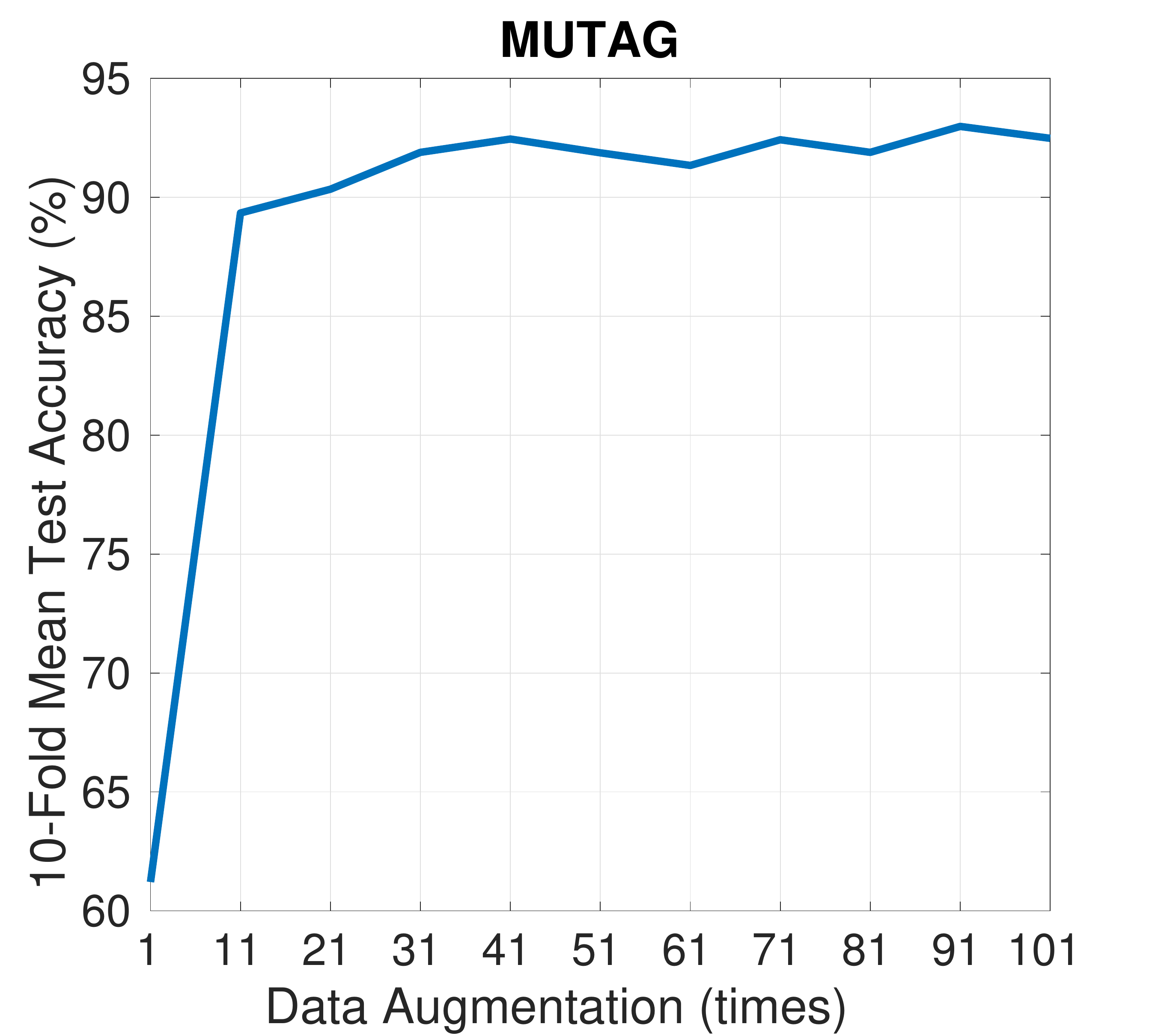}
		\vspace{-5mm}
		\caption{\footnotesize Illustration of data augmentation on classification.}
		\label{fig:augmentation}
	\end{center}
	\vspace{-10pt}
\end{wrapfigure}
\bfsection{Effect of Data Augmentation using Grid Layouts on Classification}
In order to train the deep classifiers well, the amount of training data is crucial. As shown in Alg.~\ref{alg:RKK_alg}, our method can easily generate tons of grid layouts that effectively capture different characteristics in the graph. Given the memory limit, we demonstrate the test performance for data augmentation in Fig.~\ref{fig:augmentation}, ranging from 1x to 101x with step 10x. As we see clearly, data augmentation can significantly boost the classification accuracy on MUTAG, and similar observations have been made for the other data sets.

\subsection{State-of-the-Art Comparison}
To do a fair comparison for graph classification, we follow the standard routine, \ie 10-fold cross-validation with random split. We directly cite the numbers from the leader board (as of 09/05/2019) at \url{https://paperswithcode.com/task/graph-classification} on each data set without reproducing these results. All the comparisons are summarized in Table \ref{tab:sota}, where the numbers in the brackets denote the numbers of grid layouts per graph in data augmentation.

In general, our method can match or even outperform the state-of-the-art on the four data sets, achieving two rank-1, one rank-2 and one rank-3 in terms of mean test accuracy. The small variances indicate the stability of our method. In summary, such results demonstrate the empirical success of our method on graph classification.

\makesavenoteenv{tabular}
\makesavenoteenv{table}
\begin{table}[t]
	\caption{\footnotesize State-of-the-art test accuracy (\%) comparison.}
	\label{tab:sota}
	\vspace{-2mm}
	\adjustbox{width=\columnwidth}{
		\centering
		\begin{tabular}{c|cccc}
			\toprule
			Data Set & Rank-1 & Rank-2 & Rank-3 & {\bf Ours} \\
			\midrule
            MUTAG (101x)\footnote{In the order of ranks (same as follows): \citep{haonan2019graph, niepert2016learning, ivanov2018anonymous}} & 91.20 & 88.95 & 87.87 & {\bf 92.48$\pm$5.30} \\
			IMDB-B (21x)\footnote{\citep{ivanov2018anonymous, morris2019weisfeiler} \citep{morris2019weisfeiler}} & 74.45 & 74.20 & 73.50 & {\bf 74.90$\pm$4.01} \\ 
			IMDB-M (5x)\footnote{\citep{morris2019weisfeiler, xinyi2018capsule, morris2019weisfeiler}} & 51.50 & 50.27 & 49.50 & {\bf 51.99$\pm$2.36}\\ 
			PROTEINS (5x)\footnote{\citep{haonan2019graph, li2019semi, morris2019weisfeiler}} & {\bf 77.90} & 77.26 & 76.40 & 76.44$\pm$1.56 \\ 
			\bottomrule
		\end{tabular}
	}
\end{table}

\section{Conclusion}
In this paper we answer the question positively that CNNs can be used directly for graph applications by projecting graphs onto grids properly. To this end, we propose a novel graph drawing problem, namely graph-preserving grid layout (GPGL), which is an integer programming to learn 2D grid layouts by minimizing topology loss. We propose a regularized Kamada-Kawai algorithm to solve the integer programming and a multi-scale maxout CNN to work with GPGL. We manage to demonstrate the success of our method on graph classification that matches or even outperforms the state-of-the-art on four benchmark data sets. As future work we are interested in applying this method to real-world problems such as point cloud classification.

{\small
	\bibliographystyle{aaai}
	\bibliography{egbib}
}

\end{document}